\definecolor{sztakiblue}{RGB}{0,0,0}
\def\argmin{\operatornamewithlimits{arg\,min}}
\newcommand\myeq{\stackrel{\mathclap{\normalfont\tiny\mbox{a.s.}}}{=}}
\newcommand{\PX}{P_{\scriptscriptstyle X}}
\newcommand{\PY}{P_{\scriptscriptstyle Y}}
\newcommand \dd[1]  { \,\textrm d{#1}}
\newcommand{\BX}{\mathbb{X}}
\newcommand{\BY}{\mathbb{Y}}
\newcommand{\BA}{\mathbb{A}}
\newcommand{\BI}{\mathbb{I}}
\newcommand{\CF}{\mathcal{F}}
\newcommand{\CA}{\mathcal{A}}
\newcommand{\CB}{\mathcal{B}}
\newcommand{\CH}{\mathcal{H}}
\newcommand{\CR}{\mathcal{R}}
\def \CG{\mathcal{G}}
\def \CF{\mathcal{F}}
\def \CD{\mathcal{D}}
\newcommand{\RR}{\mathbb{R}}
\newcommand{\NN}{\mathbb{N}}
\newcommand{\EE}{\mathbb{E}}
\newcommand{\PP}{\mathbb{P}}
\newcommand{\Var}{\mbox{Var}}
\def\argmin{\operatornamewithlimits{arg\,min}}        %
\DeclareMathOperator {\sign}{sign}
\newtheorem{theorem}{Theorem}
\newtheorem{definition}{Definition}
\newtheorem{proposition}{Proposition}
\newtheorem{remark}{Remark}
\newcommand{\norm}[1]{\left\lVert#1\right\rVert}
\title{\LARGE \bf
Exact Distribution-Free Hypothesis Tests for the Regression Function\\
of Binary Classification via Conditional Kernel Mean Embeddings
}
\author{Ambrus Tam\'as$^{1}$\; and\;\, Bal\'azs Csan\'ad Cs\'aji$^{1}$, \IEEEmembership{Member, IEEE}%
\thanks{*The research 
was supported by the Ministry of Innovation and Technology of Hungary NRDI Office within the framework of the Artificial Intelligence National Laboratory Program. Prepared with the professional support of the doctoral student scholarship program of the Cooperative Doctoral Program of the Ministry of Innovation and Technology financed  from the National Research, Development and Innovation Fund.
}
\thanks{$^{1}$A.~Tam\'as and B.~Cs.~Cs\'aji are with SZTAKI: Institute for Computer Science and Control, E\"otv\"os Lor\'and Research Network, Budapest, Hungary,\, {\tt\small ambrus.tamas@sztaki.hu},\, {\tt\small csaji@sztaki.hu}}%
}
\begin{document}

\maketitle
\thispagestyle{empty}
\pagestyle{empty}

\begin{abstract}
In this paper we suggest two statistical hypothesis tests for the regression function of binary classification based on conditional kernel mean embeddings. The regression function is a fundamental object in classification as it determines both the Bayes optimal classifier and the misclassification probabilities. A resampling based framework is presented and combined with consistent point estimators of the conditional kernel mean map, in order to construct distribution-free hypothesis tests. These tests are introduced in a flexible manner allowing us to control the exact probability of type I error for any sample size.  We also prove that both proposed techniques are consistent under weak statistical assumptions, i.e., the type II error probabilities pointwise converge to zero.
\end{abstract}

\begin{IEEEkeywords}
	pattern recognition and classification, statistical learning, randomized algorithms
\end{IEEEkeywords}
\vspace*{-1mm}
\section{Introduction}

\IEEEPARstart{B}{\lowercase{inary}} classification \cite{devroye2013probabilistic} is a central problem in supervised learning {\color{sztakiblue} with a lot of crucial applications, for example, in quantized system identification, signal processing and fault detection}. Kernel methods \cite{scholkopf2001learning} offer a wide range of tools to draw statistical conclusions 
by embedding datapoints and distributions into a 
(possibly infinite dimensional) 
reproducing kernel Hilbert space (RKHS), where we can take advantage of the geometrical structure. {\color{sztakiblue} These nonparametric methods often outperform the standard parametric 
approaches \cite{pillonetto2014}. A key quantity, for example in model validation, is the conditional distribution of the outputs given the inputs. A promising way to handle such conditional distributions is to apply conditional kernel mean embeddings \cite{song2009a} which are input dependent elements of an RKHS.}
In this paper we introduce distribution-free hypothesis tests for the regression function of binary classification based on these conditional 
{\color{sztakiblue} embeddings.  
Such distribution-free guarantees are of high importance, since our knowledge on the underlying distributions is often limited.}

Let $(\BX, \mathcal{X})$ be a measurable input space, where $\mathcal{X}$ is a $\sigma$-field on $\BX$, and let $\BY =\{-1,1\}$ 
be the output space. In binary classification we are given an independent and identically distributed (i.i.d.) sample $\{(X_i,Y_i)\}_{i=1}^n$ from an unknown distribution $P = P_{\scriptscriptstyle{X,Y}}$ on $\mathcal{X}\otimes \mathcal{Y}$.  Measurable $\BX \to \BY$ functions are called classifiers. Let $\mathbf{L}: \BY \times \BY \to \RR^{+}$ be a nonnegative measurable loss function. In this paper we restrict our attention to the archetypical $0/1$-loss given by the indicator $\mathbf{L}(y_1,y_2) \doteq \BI(y_1 \neq y_2)$ for $y_1$, $y_2  \in \BY$. In general, our aim is to minimize the Bayes risk, which is $R(\phi)\doteq \EE\, [\,\mathbf{L}(\phi(X),Y)\,]$ for classifier $\phi$, i.e., the expected loss. It is known that for the $0/1$-loss, the Bayes risk is the misclassification probability $R(\phi) = \PP(\, \phi(X) \neq Y\,)$ and a risk minimizer ($\PX$-a.e.) equals to the sign of the regression function $f_*(x) \doteq \EE\,[\,Y\, |\, X=x\,]$, i.e., classifier\footnote{Let the $\sign$ function be defined as $\sign(x) = \BI(x\geq 0) - \BI(x <0)$.}  $\phi_*(x) = \sign\hspace{-0.5mm}\big(f_*(x)\big)$ reaches the optimal risk
\cite[Theorem 2.1]{devroye2013probabilistic}. It can also be proved that the conditional distribution of $Y$ given $X$ is encoded in $f_*$ for binary outputs. 

{\color{sztakiblue}One of the main challenges} in statistical learning is that distribution $P$ is unknown, therefore the true risk cannot be directly minimized, only through empirical estimates \cite{Vapnik1998}.
Vapnik's theory 
quantifies the rate of convergence for several 
approaches {\color{sztakiblue}(empirical and structural risk minimization),} but these bounds are usually conservative for small samples. The literature is rich in efficient point estimates, but there is a high demand for 
{\color{sztakiblue} distribution-free uncertainty quantification}.

It is well-known that hypothesis tests {\color{sztakiblue} are closely related to confidence regions.} Distribution-free confidence regions for classification 
received considerable interest, for example, Sadinle {\em et al.} suggested set-valued estimates with guaranteed coverage confidence \cite{sadinle2019least},
Barber studied the limitations of such distribution-free region estimation methods \cite{barber2020distribution}, while Gupta {\em et al.} analyzed score based classifiers and the connection of calibration, confidence intervals and prediction sets \cite{gupta2020distribution}.

{\color{sztakiblue}Our main contribution is that, building on the distribution-free 
resampling framework of \cite{csajitamas2019} which was motivated by finite-sample system identification methods 
\cite{care2017finite}, we suggest conditional kernel mean embeddings based ranking functions to construct hypothesis tests for the regression function of binary classification. Our tests have exact non-asymptotic guarantees for the probability of type I error and have strong asymptotic guarantees regarding the type II error probabilities.}
\smallskip
\vspace*{-1mm}
\section{Reproducing Kernel Hilbert Spaces}

\subsection{Real-Valued Reproducing Kernel Hilbert Spaces}
Let $k:\BX  \times \BX \to \RR$ be {\color{sztakiblue} a symmetric and positive-definite kernel,}
i.e., for all $n \in \NN$, $x_1, \dots, x_n \in \BX, a_1,\dots,a_n \in \RR$:
\begin{equation}
\begin{aligned}
\!\!\!\sum_{i,j=1}^n k(x_i,x_j)\,a_i \,a_j \geq 0.
\end{aligned}
\end{equation}
Equivalently, kernel {\color{sztakiblue} (or Gram)} matrix $K \in \RR^{n\times n}$, where $K_{i,j}\doteq k(x_i,x_j)$ for all $i$, $j \in [n] \doteq \{1,\dots, n\}$, is required to be positive semidefinite. Let $\CF$ denote the corresponding reproducing kernel Hilbert space containing $\BX \to \RR$ functions, see \cite{scholkopf2001learning},
where $k_x(\cdot)=k(\cdot,x) \in \CF$ and the reproducing property,
$f(x) = \langle f,k(\cdot,x)\rangle_{\CF}$,
holds for all $x \in \BX$ and $f \in \CF$. {\color{sztakiblue} Let $l:\BY \times \BY \to \RR$ denote a symmetric and positive-definite kernel and let $\CG$ be the corresponding RKHS.}
%
%
\vspace*{-2mm}
\subsection{Vector-Valued Reproducing Kernel Hilbert Spaces}

The definition of conditional kernel  mean embeddings \cite{song2009a} requires a generalization of real-valued RKHSs \cite{michelli2005, grunewalder2012}.

\begin{definition}
Let $\CH$ be a Hilbert space of $\BX \to \CG$ type functions with inner product $\langle \cdot, \cdot\rangle_\CH$, where $\CG$ is a Hilbert space. 
$\CH$ is a {\em vector-valued RKHS} {\color{sztakiblue}if for all $x \in \BX
	$ and $g \in \CG$ the linear functional (on $\CH$) $h \mapsto \langle g, h(x) \rangle_\CG$ is bounded.
}
\end{definition}

{\color{sztakiblue}Then} by the Riesz representation theorem, for {\color{sztakiblue}all} $(g,x)\in \CG \times \BX$ there exists a unique $\tilde{h} \in \CH$ for which 
{\color{sztakiblue} $\langle g, h(x) \rangle_\CG = \langle \tilde{h}, h \rangle_\CH$}.
Let $\Gamma_x$ be a $\CG \to \CH$ 
operator defined as 
$\Gamma_x g = \tilde{h}$. The notation is justified because $\Gamma_x$ is linear. Further, let $\mathcal{L}(\CG)$ denote the bounded linear operators on $\CG$ and let $\Gamma: \BX \times \BX \to \mathcal{L}(\CG)$ be defined as
{\color{sztakiblue}$\Gamma(x_1,x_2)g \doteq (\Gamma_{x_2} g)(x_1) \in \CG.$}
We will use the following result 
\cite[Proposition 2.1]{michelli2005}:
\begin{proposition}
{\color{sztakiblue} Operator $\Gamma$ satisfies 
for all $x_1$, $x_2 \in \BX$:}
\begin{enumerate}
 {\color{sztakiblue} \item 
 $\forall g_1, g_2 \in \CG:$
$\langle g_1 , \Gamma(x_1,x_2)g_2 \rangle_\CG = \langle \Gamma_{x_1} g_1, \Gamma_{x_2} g_2 \rangle_\CG$.}\\[-3mm]
 \item $\Gamma(x_1,x_2) \in \mathcal{L}(\CG)$, $\Gamma(x_1, x_2) = \Gamma^*(x_2,x_1)$, and for all $x \in \BX$ operator $\Gamma(x,x)$ is positive.\\[-3mm]
 \item For all $n \in \NN$, $\{ (x_i)\}_{i=1}^n \subseteq \BX$ and $\{ (g_j)\}_{j=1}^n \subseteq \CG$:
 \vspace{-1mm}
 \begin{equation}
 \sum_{i,j=1}^n \langle g_i, \Gamma (x_i,x_j) g_j \rangle_\CG \geq 0.
 \vspace{2mm}
 \end{equation}
\end{enumerate}
\end{proposition}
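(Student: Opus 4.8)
The plan is to reduce the whole proposition to item~(1), which in turn is an immediate consequence of the Riesz-representation definition of $\Gamma_x$ together with the identity $\langle g, h(x)\rangle_\CG = \langle \Gamma_x g, h\rangle_\CH$ valid for every $h \in \CH$. To obtain item~(1) I would substitute the specific element $h \doteq \Gamma_{x_2} g_2 \in \CH$ into this identity; since by definition $\Gamma(x_1,x_2) g_2 = (\Gamma_{x_2} g_2)(x_1) = h(x_1)$, this yields
\begin{equation}
\langle g_1, \Gamma(x_1,x_2) g_2 \rangle_\CG = \langle g_1, h(x_1)\rangle_\CG = \langle \Gamma_{x_1} g_1, h \rangle_\CH = \langle \Gamma_{x_1} g_1, \Gamma_{x_2} g_2 \rangle_\CH ,
\end{equation}
which is the asserted reproducing-type formula (the inner product on the right being the one of $\CH$). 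Bilinearity in $(g_1,g_2)$ of both sides follows from linearity of $g \mapsto \Gamma_x g$, which is itself inherited from linearity of the functional $g \mapsto \langle g, h(x)\rangle_\CG$ and uniqueness of the Riesz representative.

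For item~(2), linearity of $\Gamma(x_1,x_2)$ on $\CG$ is clear, so the only genuinely analytic point is boundedness. First I would show that $\Gamma_x \colon \CG \to \CH$ is bounded via the closed graph theorem: if $g_n \to g$ in $\CG$ and $\Gamma_x g_n \to \psi$ in $\CH$, then for every $h \in \CH$ we have $\langle \psi, h\rangle_\CH = \lim_n \langle \Gamma_x g_n, h\rangle_\CH = \lim_n \langle g_n, h(x)\rangle_\CG = \langle g, h(x)\rangle_\CG = \langle \Gamma_x g, h\rangle_\CH$, hence $\psi = \Gamma_x g$ and the graph is closed. Next, the evaluation map $h \mapsto h(x_1)$ is bounded from $\CH$ to $\CG$, because $\norm{h(x_1)}_\CG = \sup_{\norm{g}_\CG \le 1} \langle g, h(x_1)\rangle_\CG = \sup_{\norm{g}_\CG \le 1} \langle \Gamma_{x_1} g, h\rangle_\CH \le \norm{\Gamma_{x_1}}\,\norm{h}_\CH$. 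Since $\Gamma(x_1,x_2)$ is exactly the composition of this evaluation with $\Gamma_{x_2}$, it lies in $\mathcal{L}(\CG)$. The adjoint relation $\Gamma(x_1,x_2) = \Gamma^*(x_2,x_1)$ then drops out of item~(1) applied twice: working over $\RR$, both $\langle g_1, \Gamma(x_1,x_2) g_2\rangle_\CG$ and $\langle \Gamma(x_2,x_1) g_1, g_2\rangle_\CG$ equal $\langle \Gamma_{x_1} g_1, \Gamma_{x_2} g_2\rangle_\CH$ by symmetry of the $\CH$-inner product. Positivity of $\Gamma(x,x)$ is the special case $x_1 = x_2 = x$, $g_1 = g_2 = g$ of item~(1), giving $\langle g, \Gamma(x,x) g\rangle_\CG = \norm{\Gamma_x g}_\CH^2 \ge 0$.

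Finally, item~(3) is a one-line consequence of item~(1): expanding the $\CH$-inner product of a finite sum,
\begin{equation}
\sum_{i,j=1}^n \langle g_i, \Gamma(x_i,x_j) g_j \rangle_\CG = \sum_{i,j=1}^n \langle \Gamma_{x_i} g_i, \Gamma_{x_j} g_j \rangle_\CH = \Big\| \sum_{i=1}^n \Gamma_{x_i} g_i \Big\|_\CH^2 \ge 0 .
\end{equation}
The only step that is more than bookkeeping is the boundedness argument inside item~(2); I do not expect it to be a serious obstacle, since the closed-graph computation above is entirely routine. In short, once the identity in item~(1) is in hand, items~(2) and~(3) are purely formal, so essentially all the (light) work lies in the reproducing-property manipulation of the first paragraph.
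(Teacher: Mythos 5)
Your proof is correct, and it is essentially the standard argument: the paper itself states this proposition without proof, citing \cite[Proposition 2.1]{michelli2005}, and your derivation (item~(1) from the Riesz-representation definition of $\Gamma_x$, then (2) and (3) as formal consequences, with closed-graph/evaluation-operator arguments supplying boundedness) is exactly the proof given in that reference. You are also right to read the inner product on the right-hand side of item~(1) as $\langle \cdot,\cdot\rangle_\CH$ rather than $\langle \cdot,\cdot\rangle_\CG$ as typeset in the paper, since $\Gamma_{x_1}g_1, \Gamma_{x_2}g_2 \in \CH$.
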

When properties {\em $1)-3)$} hold, we call $\Gamma$ a {\em vector-valued reproducing kernel}. Similarly to the classical Moore-Aronszjan theorem \cite[Theorem 3]{berlinet2004reproducing}, for any kernel $\Gamma$, there uniquely exists (up to isometry) a vector-valued RKHS, having  $\Gamma$ as its reproducing kernel \cite[Theorem 2.1]{michelli2005}.

\vspace*{-1mm}
\section{Kernel Mean Embeddings}

\subsection{Kernel Means of Distributions}
{\color{sztakiblue}
Kernel functions with a fixed argument are feature maps, i.e., they represent input points from $\BX$ in Hilbert space $\CF$ by mapping $x \mapsto k(\cdot,x)$.
Let $X$ be a random variable with distribution $P_{\scriptscriptstyle{X}}$, then $k(\cdot,X)$ is a random element in $\CF$. The kernel mean embedding of distribution $P_{\scriptscriptstyle{X}}$ is defined as $\mu_{\scriptscriptstyle{X}} \doteq \EE [\,k(\cdot, X)\,]$, where the integral is a Bochner integral \cite{hytonen2016analysis}. 

It can be proved that if kernel $k$ is measurable as well as
$\EE \big[\,\sqrt{ k(X,X)}\,\big] < \infty$
holds, then the kernel mean embedding of $\PX$ exists and it is the representer of the bounded, linear expectation functional w.r.t.\ $X$, therefore $\mu_{\scriptscriptstyle{X}} \in \CF$ and we have $\langle f, \mu_{\scriptscriptstyle{X}}\rangle_\CF = \EE\big[\, f(X)\,\big]$ for all $f \in \CF$ \cite{smola2007hilbert}.
Similarly, for variable $Y$ let $\mu_{\scriptscriptstyle{Y}}$ be the kernel mean embedding of $\PY$.}

\vspace*{-2mm}
\subsection{Conditional Kernel Mean Embeddings}
If the kernel mean embedding of $P_{\scriptscriptstyle{Y}}$ exists, then $l_Y \doteq l(\circ, Y) \in L_1(\Omega, \CA,\PP;\CG)$, {\color{sztakiblue} that is $l_Y$ is a Bochner integrable $\CG$-valued random element}, hence for all $\CB \subseteq \CA$ the conditional expected value can be defined. Let $\CB \doteq \sigma(X)$ be the $\sigma$-field generated by random element $X$, then the conditional kernel mean embedding of $P_{Y|X}$ in RKHS $\CG$ is defined as
\begin{equation}
\begin{aligned}\label{eq:cond_mean}
&\mu_{{\scriptscriptstyle{Y|X}}} = {\color{sztakiblue} \mu_*(X)} \doteq \EE \big[\, l(\circ, Y) \,|\, X\, \big],
\end{aligned}
\end{equation}
see \cite{ParMua20}, where {\color{sztakiblue}$\mu_*$} is a $\PX$-a.e. defined (measurable) conditional kernel mean map.
It is easy to see that for all $g \in \CG$ 
\begin{equation}\label{eq:30_inner}
\EE\big[\, g(Y) \,|\,X\,\big] \myeq \langle\, g, \EE[\, l(\circ, Y)\,|\,X\,]\,\rangle_\CG,
\end{equation}
showing that this approach is equivalent to the definition in \cite{grunewalder2012}. We note that the original paper \cite{song2009a} introduced conditional mean embeddings as $\CF \to \CG$ type operators. The presented approach is more natural and has theoretical advantages as its existence and uniqueness is usually ensured.

\vspace*{-2mm}
\subsection{Empirical Estimates of Conditional Kernel Mean Maps}

The advantage of dealing with kernel means instead of the distributions is that we can use the structure of the Hilbert space. In statistical learning, the underlying distributions are unknown, thus their kernel mean embeddings are needed to be estimated. A typical assumption for classification is that:
\begin{itemize}
\item[A0] Sample $\CD_0\! \doteq\! \{(X_i,Y_i)\}_{i=1}^n$ is i.i.d.\ with distribution $P$.
\end{itemize}
%

%
The empirical estimation of conditional kernel mean map {\color{sztakiblue}$\mu_*$} $: \BX \to \CG$ is challenging in general, because its dependence on $x \in \BX$ can be {\color{sztakiblue}complex}. 
The standard approach defines estimator $\widehat{\mu}$ as a regularized empirical risk minimizer in a vector-valued RKHS, see \cite{grunewalder2012}, which is equivalent to the originally proposed operator estimates in \cite{song2009a}. %

By \eqref{eq:30_inner} it is intuitive to
estimate {\color{sztakiblue} $\mu_*$} with a minimizer of the following objective 
over some space $\CH$ 
\cite[Equation 5]{grunewalder2012}:
\begin{equation}
\mathcal{E}({\color{sztakiblue} \mu}) = \sup_{\norm{g}_\CG \leq 1} \EE\big[\,\big|\,E[ g(Y) \,|\,X]- \langle g, {\color{sztakiblue} \mu(X)}  \rangle_\CG \,|^2\,\big].
\end{equation}
Since $\EE[\, g(Y) \,|\,X\,]$ is not observable, the authors of \cite{grunewalder2012} have introduced the following surrogate loss function:
\begin{equation}
\mathcal{E}_s({\color{sztakiblue} \mu})\doteq \EE\Big[\,\norm{l(\circ,Y)- {\color{sztakiblue} \mu(X)}}_\CG^2\Big].
\end{equation}
{\color{sztakiblue}It can be shown \cite{grunewalder2012} that $\mathcal{E}(\mu) \leq \mathcal{E}_s(\mu)$ for all $\mu: \BX \to  \CG$, moreover under suitable conditions \cite[Theorem 3.1]{grunewalder2012} the minimizer of $\mathcal{E}(\mu)$ $\PX$-a.s. equals to the minimizer of $\mathcal{E}_s(\mu)$, hence the surrogate version can be used.} The main advantage of $\mathcal{E}_s$ is that it can be estimated empirically as:
\vspace{-1mm}
\begin{equation}
\label{eq:surrogate}
\widehat{\mathcal{E}}_s({\color{sztakiblue} \mu})\doteq \frac{1}{n}\sum_{i=1}^n\,\norm{l(\circ,Y_i)- {\color{sztakiblue} \mu(X_i)}}_\CG^2.
\vspace{-0.5mm}
\end{equation}
To make the {\color{sztakiblue} problem} tractable, {\color{sztakiblue} we minimize \eqref{eq:surrogate} over a vector-valued RKHS, $\CH$.} There {\color{sztakiblue} are} several choices for $\CH$. 
An intuitive approach is to use the space induced by kernel 
$\Gamma(x_1,x_2) \doteq k(x_1,x_2) \mathbf{Id}_\CG,$ {\color{sztakiblue}where $x_1$, $x_2 \in \BX$ and $\mathbf{Id}_\CG$ is the identity map on 
$\CG$}. %
Henceforth, we will focus on this kernel, as it leads to the same estimator as the one originally proposed 
in \cite{song2009a}. A regularization term is also used to prevent overfitting and {\color{sztakiblue} to ensure well-posedness, hence, the estimator is defined as}
\begin{equation}
\widehat{\mu}=\widehat{\mu}_\CD = \widehat{\mu}_{n,\lambda} \doteq \argmin_{{\color{sztakiblue} \mu}  \in \CH}\widehat{\mathcal{E}}_\lambda({\color{sztakiblue} \mu}) 
\vspace{-1mm}
\end{equation}
where $\widehat{\mathcal{E}}_\lambda({\color{sztakiblue} \mu} ) \doteq \big[\widehat{\mathcal{E}}_s({\color{sztakiblue} \mu}) + \nicefrac{\lambda}{n} \norm{{\color{sztakiblue} \mu}}_\CH^2\big]$.
An explicit form of $\widehat{\mu}$ can be given by \cite[Theorem 4.1]{michelli2005} (cf.\ representer theorem):
\begin{theorem}\label{theorem:michelli}
If $\widehat{\mu}$ minimizes $\mathcal{E}_\lambda$ in $\CH$, then it is unique and admits the form  of 
$\widehat{\mu} = \sum_{i=1}^n \Gamma_{X_i}c_i,$
where coefficients $\{(c_i)\}_{i=1}^n$, $c_i \in \CG$ for $i \in [n]$, are the unique solution of
\vspace{-1mm}
\begin{equation*}
\sum_{j=1}^n \big( \, \Gamma(X_i,X_j) + \lambda \BI (i=j) \mathbf{Id}_\CG\,\big)c_j = l(\circ,Y_i)\; \;\; \text{for}\;\;i \in [n].
\vspace{2mm}
\end{equation*}
\end{theorem}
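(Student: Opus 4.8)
The plan is to recast the problem as a quadratic minimization over the product Hilbert space $\CG^n \doteq \CG \times \cdots \times \CG$ through the bounded \emph{sampling operator} $S:\CH \to \CG^n$, $S\mu \doteq (\mu(X_1),\dots,\mu(X_n))$ (boundedness of $S$ follows from the vector-valued RKHS axiom, which makes each evaluation $\mu \mapsto \mu(x)$ a bounded $\CH \to \CG$ operator). The first step is to identify the adjoint $S^*:\CG^n \to \CH$: for $c = (c_1,\dots,c_n)$ and $h \in \CH$, the reproducing identity $\langle \Gamma_x g, h\rangle_\CH = \langle g, h(x)\rangle_\CG$ yields $\langle S^* c, h\rangle_\CH = \sum_{i=1}^n \langle c_i, h(X_i)\rangle_\CG = \langle \sum_{i=1}^n \Gamma_{X_i} c_i,\, h\rangle_\CH$, so $S^* c = \sum_{i=1}^n \Gamma_{X_i} c_i$. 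Evaluating this at $X_i$ and using $\Gamma(x_1,x_2)g = (\Gamma_{x_2} g)(x_1)$ shows $(SS^* c)_i = \sum_{j=1}^n \Gamma(X_i,X_j) c_j$; hence $SS^* + \lambda\,\mathbf{Id}_{\CG^n}$ acts on $\CG^n$ exactly as the left-hand side of the linear system in the statement, with right-hand side $L \doteq (l(\circ,Y_1),\dots,l(\circ,Y_n))$.

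Next I would prove unique solvability of the system. The operator $SS^*$ is bounded and self-adjoint, and $\langle c, (SS^* + \lambda\,\mathbf{Id})c\rangle_{\CG^n} = \norm{S^* c}_\CH^2 + \lambda \norm{c}_{\CG^n}^2 \geq \lambda \norm{c}_{\CG^n}^2$; note that $\norm{S^* c}_\CH^2 = \sum_{i,j}\langle c_i, \Gamma(X_i,X_j)c_j\rangle_\CG \geq 0$ is exactly part 3) of the Proposition. Thus $SS^* + \lambda\,\mathbf{Id}$ is a bounded self-adjoint operator bounded below by $\lambda > 0$, hence boundedly invertible, and $c \doteq (SS^* + \lambda\,\mathbf{Id})^{-1}L$ is the unique solution of the system.

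Finally I would check that $\widehat{\mu} \doteq S^* c = \sum_{i=1}^n \Gamma_{X_i} c_i$ is the unique minimizer of the regularized empirical objective $\widehat{\mathcal{E}}_\lambda(\mu) = \tfrac1n\norm{L - S\mu}_{\CG^n}^2 + \tfrac{\lambda}{n}\norm{\mu}_\CH^2$. This functional is Fréchet differentiable with derivative $h \mapsto \tfrac2n\langle S^* S\mu + \lambda\mu - S^* L,\, h\rangle_\CH$, and it is strictly convex because $\mu \mapsto \norm{\mu}_\CH^2$ is; consequently it has at most one minimizer, and a point is a global minimizer if and only if it is stationary, i.e.\ $S^* S\mu + \lambda\mu = S^* L$. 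For $\mu = S^* c$ one has $S^* S S^* c + \lambda S^* c = S^*\big((SS^* + \lambda\,\mathbf{Id})c\big) = S^* L$, so $\widehat{\mu}$ is stationary, hence the (unique) minimizer; this establishes simultaneously the closed form and the uniqueness asserted in the theorem.

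The adjoint computation and the differentiation are routine. The one step that needs genuine care — and where the possibly infinite-dimensional $\CG$ stops being mere bookkeeping — is the passage from positive definiteness of the Gram-type operator to its invertibility: finite-dimensional linear algebra is unavailable, so one must invoke the operator-theoretic fact that a bounded self-adjoint operator with a strictly positive lower bound has a bounded inverse, which is precisely where part 3) of the Proposition enters. It is also worth remarking that $c$ is unique \emph{as the solution of the linear system}, even though the expansion $\widehat{\mu} = \sum_i \Gamma_{X_i} c_i$ need not be a unique linear combination when the ranges of the $\Gamma_{X_i}$ overlap.
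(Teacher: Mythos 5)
Your proof is correct. Note first that the paper does not prove this statement at all: it is imported verbatim from Micchelli and Pontil (Theorem~4.1 of \cite{michelli2005}), so there is no in-paper argument to match yours against. The cited original proof runs along the classical representer-theorem lines: one decomposes $\CH$ orthogonally into the closed span of $\{\Gamma_{X_i}g : g \in \CG,\ i \in [n]\}$ and its complement, observes that the complement component leaves the data-fit term unchanged while strictly increasing $\norm{\cdot}_\CH^2$, concludes the minimizer lies in the span, and then substitutes the finite expansion to obtain the linear system from the normal equations. Your route through the sampling operator $S$ and the push-through identity $S^*SS^*c+\lambda S^*c = S^*(SS^*+\lambda\,\mathbf{Id})c$ reaches the same conclusion without ever invoking the orthogonal decomposition, and it packages the invertibility of the Gram-type system cleanly as coercivity of the bounded self-adjoint operator $SS^*+\lambda\,\mathbf{Id}$ --- a point you are right to flag, since $\CG$ need not be finite-dimensional in general (though for the na\"ive kernel used later in the paper it is two-dimensional and the system is genuinely finite linear algebra). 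The only step stated slightly faster than it deserves is the boundedness of each evaluation $\mu \mapsto \mu(x)$: the RKHS axiom gives only weak boundedness, but the upgrade is immediate from $\norm{\mu(x)}_\CG = \sup_{\norm{g}_\CG\leq 1}\langle \Gamma_x g, \mu\rangle_\CH \leq \norm{\Gamma(x,x)}^{1/2}\norm{\mu}_\CH$, using property 1) of the Proposition. Both approaches are standard and of comparable length; yours has the advantage of delivering existence, uniqueness, and the explicit coefficients in one stroke via strict convexity and stationarity, whereas the projection argument isolates more transparently \emph{why} the minimizer must live in the finite-dimensional span.
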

By Theorem \ref{theorem:michelli} we have: $c_i = \sum_{j=1} W_{i,j} l(\circ, Y_j)$ for $i\in [n]$,
with $W = (K + \lambda \mathbf{I})^{-1}${\color{sztakiblue}, where $\mathbf{I}$ is the identity matrix.}

\section{Distribution-Free Hypothesis Tests}\label{sec-hyp}

For binary classification, one of the most intuitive kernels on the output space is $l(y_1,y_2) \doteq \BI(y_1 =y_2)$ for $y_1$, $y_2 \in \BY$,
which is called the na\"ive kernel. It is easy to prove that $l$ is symmetric and positive definite. Besides, we can describe {\color{sztakiblue} its induced} RKHS $\CG$ as $\{ a_1 \cdot l(\circ,1) + a_2 \cdot l(\circ,-1)\,|\,a_1, a_2 \in \RR \}$. Hereafter, $l$ will denote this kernel for the output space.
\vspace*{-2mm}
\subsection{Resampling Framework}
 
We consider the following hypotheses:
\begin{equation}
\begin{aligned}
&H_0: \;\; f_* = f \;\;\;(P_{\scriptscriptstyle{X}} \text{-a.s.})\\[1mm]
&H_1: \;\; \neg\, H_0
\end{aligned}
\end{equation}
for a given candidate regression function $f$, where $\neg\, H_0$ denotes the negation of $H_0$. For the sake of simplicity, we will use the slightly inaccurate notation $f_* \neq f$ for $H_1$, which refers to inequality in the $L_2(P_{\scriptscriptstyle{X}})$-sense. To avoid misunderstandings, we will call $f$ the ``candidate'' regression function and $f_*$ the ``true'' regression function.

One of our main observations is that in binary classification the regression function determines the conditional distribution of $Y$ given $X$, i.e., by \cite[Theorem 2.1]{devroye2013probabilistic} we have
\begin{equation}
\begin{aligned}
\PP_{*}(Y =1 \,|\,X) = \frac{f_*(X) +1}{2} = p_*(X).
\end{aligned}
\end{equation}
Notation $\PP_{*}$ is introduced to emphasize the dependence of the conditional distribution on $f_*$. Similarly, candidate function $f$ can be used to determine a conditional distribution given $X$. Let $\bar{Y}$ be such that $\PP_{f}(\bar{Y} =1 \,|\,X) = (f(X) +1)\,/\,{2}= p(X)$. Observe that if $H_0$ is true, then $(X,Y)$ and $(X,\bar{Y})$ have the same joint distribution, while when $H_1$ holds, then $\bar{Y}$ has a ``different'' conditional distribution 
w.r.t.\ $X$ than $Y$. 
Our idea is to imitate sample $\CD_0= \{(X_i,Y_i)\}$ by generating alternative outputs for the original input points from the conditional distribution induced by the candidate function $f$, i.e., let $m>1$ be a user-chosen integer and let us defined samples
\begin{equation}
\begin{aligned}
\CD_j \doteq \{(X_i,\bar{Y}_{i,j})\}_{i=1}^n \;\;\;\; \text{for }\;\; j \in [m-1].
\end{aligned}
\end{equation}
An uninvolved way to produce $\bar{Y}_{i,j}$ for $i \in [n]$, $j \in [m-1]$ is as follows.  We generate i.i.d.\ uniform variables from $(-1,1)$. Let these  be $U_{i,j}$ for $i \in [n]$ and $j \in [m-1]$. Then we take
\begin{equation}
\bar{Y}_{i,j} = \BI (U_{i,j} \leq f(X_i))- \BI (U_{i,j} > f(X_i)),
\end{equation} 
for $(i,j) \in [n]\times[m-1]$.
The following remark highlights one of the main advantages of this scheme. 
\vspace{1.5mm}
\begin{remark}
If $H_0$ holds, then $\{(\CD_j)\}_{j=0}^{m-1}$ are conditionally i.i.d.\ w.r.t.\ $\{(X_i)\}_{i=1}^n$, hence they are also exchangeable.
\end{remark} 
\vspace{1.5mm}
The suggested {\color{sztakiblue} distribution-free} hypothesis tests are carried out via rank statistics as described in \cite{csajitamas2019}, where our resampling  
framework {\color{sztakiblue} for classification} was first introduced. {\color{sztakiblue} That is, we define a suitable ordering on the samples and accept the nullhypothesis when the rank of the original sample is not ``extremal'' (neither too low nor too high), i.e., the original sample does not differ significantly from the alternative ones.
More abstractly, we define our 
tests via ranking functions:}
\vspace{1.5mm}
\begin{definition}[ranking function]\label{def:ranking-function}
Let 
$\BA$ be a measurable space. A (measurable) function 
$\psi : \BA^m \to [\,m\,]$ is called a 
{\em ranking function} if for all $(a_1, \dots, a_m) \in \BA^m$ we have:%
\begin{enumerate}
\item[P1] For all permutations {\color{sztakiblue}$\nu$} of the set $\{2,\dots, m\}$, 
we have 
\begin{equation*}
\psi\big(\,a_1, a_{2}, \dots, a_{m}\,\big)\; = \;
\psi\big(\,a_1, a_{{\color{sztakiblue}\nu(2)}}, \dots, a_{{\color{sztakiblue}\nu(m)}}\,\big),
\end{equation*}
that is the function is invariant w.r.t. reordering the last $m-1$ terms of its arguments.
\item[P2] For all $i,j \in  [\,m\,]$,
if $a_i \neq a_j$, then we have
\begin{equation}
\psi\big(\,a_i, \{a_{k}\}_{k\neq i}\,\big)\, \neq \;\psi\big(\,a_j, \{a_{k}\}_{k\neq j}\,\big),
\end{equation}
where the simplified notation is justified by P1.
\end{enumerate}
\end{definition}
\vspace{1.5mm}
{\color{sztakiblue} Because of P2 when $a_1, \ldots, a_n \in \BA$  are pairwise different $\psi$ assigns a unique {\em rank} in $[m]$ to each $a_i$ by $\psi(a_i, \{a_k\}_{k\neq i}))$. We would like to consider the rank of $\CD_0$ w.r.t. $\{\CD_i\}_{I=1}^{m-1}$, hence  %
we apply ranking functions on $\CD_0,\ldots, \CD_{m-1}$.  One can observe} that these datasets are not necessarily pairwise different causing a technical challenge. To resolve  {\color{sztakiblue} ties in the ordering} we extend each sample with the different values of a uniformly generated (independently from every other variable) random permutation, $\pi$, on set $[m]$, i.e., we let
\vspace{-1mm}
\begin{equation}
\CD_j^\pi \doteq (\CD_j, \pi(j))\;\;\; \text{for}\;\;\; j = 1,\dots, m-1
\vspace{-1mm}
\end{equation}
{\color{sztakiblue}
and $\CD_0^\pi \doteq (\CD_0, \pi(m))$.
Assume that a ranking function  $\psi: (\BX \times \BY)^n \times [m] \to [m]$ is given. Then, we define our tests as follows. Let $p$ and $q$ be user-chosen integers such that $1 \leq p < q \leq m$. We accept hypothesis $H_0$ if and only if}
\begin{equation}
\begin{aligned}\label{eq:test}
p \leq \psi\big( \CD_0^\pi, \dots, \CD_{m-1}^{\pi}\big) \leq  q,
\end{aligned}
\end{equation}
{\color{sztakiblue} i.e., we reject the nullhypothesis if the computed rank statistic is ``extremal'' (too low or too high).}
Our main tool to determine exact type I error probabilities is Theorem \ref{theorem:8}, originally proposed in \cite[Theorem 1]{csajitamas2019}.
\vspace{1.5mm}
\begin{theorem}\label{theorem:8}
Assume that $\CD_0$ is an i.i.d.\ sample (A0). For all ranking function $\psi$, if $H_0$ holds true, then we have
\begin{equation}
\PP\, \Big( \,p \leq \psi\big( \CD_0^\pi, \dots, \CD_{m-1}^{\pi}\,\big) \leq  q\Big) = \frac{q-p+1}{m}.
\vspace{3mm}
\end{equation}
\end{theorem}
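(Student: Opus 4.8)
The plan is to exploit exchangeability of the tie-broken samples together with the two defining properties of a ranking function. First I would condition on the inputs $\{X_i\}_{i=1}^n$. Under $H_0$ the Remark tells us that $\CD_0,\dots,\CD_{m-1}$ are conditionally i.i.d.\ given $\{X_i\}_{i=1}^n$, hence exchangeable. Since the permutation $\pi$ is drawn uniformly on $[m]$ independently of everything else, attaching the label $\pi(m)$ to $\CD_0$ and the label $\pi(j)$ to $\CD_j$ for $j\in[m-1]$ amounts to assigning a uniformly random bijection from $\{0,\dots,m-1\}$ onto $[m]$ to the (exchangeable) family of datasets. A short argument --- replacing $\pi$ by its composition with an arbitrary permutation, which leaves both its law and its independence from the $\CD_j$ intact --- shows that the tuple $(\CD_0^\pi,\dots,\CD_{m-1}^\pi)$ is itself exchangeable, first conditionally on $\{X_i\}_{i=1}^n$ and therefore also unconditionally.

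Next I would record the role of the tie-breaking: because $\pi$ is injective, the appended coordinates of $\CD_i^\pi$ and $\CD_j^\pi$ differ whenever $i\neq j$, so $\CD_0^\pi,\dots,\CD_{m-1}^\pi$ are pairwise distinct. Hence property P2 of Definition \ref{def:ranking-function} applies to every pair, which forces the map $i\mapsto \psi\big(\CD_i^\pi,\{\CD_k^\pi\}_{k\neq i}\big)$ to be an injection from $\{0,\dots,m-1\}$ into $[m]$, i.e.\ a bijection. In particular, for each fixed rank $r\in[m]$ exactly one of the $m$ datasets attains it:
\begin{equation*}
\sum_{i=0}^{m-1}\one\Big(\psi\big(\CD_i^\pi,\{\CD_k^\pi\}_{k\neq i}\big)=r\Big)=1 .
\end{equation*}

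Then I would combine the two facts. By exchangeability and property P1 (which lets us disregard the order of the last $m-1$ arguments), the random variable $\psi\big(\CD_i^\pi,\{\CD_k^\pi\}_{k\neq i}\big)$ has the same distribution for every $i\in\{0,\dots,m-1\}$; write $p_r\doteq\PP\big(\psi(\CD_i^\pi,\{\CD_k^\pi\}_{k\neq i})=r\big)$, which does not depend on $i$. Taking expectations in the displayed identity and summing over $i$ gives $m\,p_r=1$, so $p_r=1/m$ for each $r\in[m]$. Finally, since the events $\{\psi(\CD_0^\pi,\dots,\CD_{m-1}^\pi)=r\}$ for $r=p,\dots,q$ are pairwise disjoint,
\begin{equation*}
\PP\Big(p\leq\psi\big(\CD_0^\pi,\dots,\CD_{m-1}^\pi\big)\leq q\Big)=\sum_{r=p}^{q}\frac{1}{m}=\frac{q-p+1}{m}.
\end{equation*}

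The step I expect to demand the most care is the first one: verifying rigorously that appending an independent uniform permutation to an exchangeable (conditionally i.i.d.) family again yields an exchangeable family, and in particular that the tie-breaking preserves the full symmetry among the $m$ datasets --- this is precisely what makes P2 applicable and what turns the ``rank of $\CD_0$'' into a uniformly distributed quantity on $[m]$.
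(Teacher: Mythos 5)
Your proof is correct and follows essentially the same route as the source: the paper does not reprove Theorem \ref{theorem:8} but cites \cite[Theorem 1]{csajitamas2019}, whose argument is exactly your combination of (i) exchangeability of the tie-broken tuple $(\CD_0^\pi,\dots,\CD_{m-1}^\pi)$ under $H_0$, (ii) pairwise distinctness from the injective permutation labels so that P1--P2 make the rank map a bijection onto $[m]$, and (iii) the counting/expectation step giving each rank probability $1/m$. No gaps; the step you flag as delicate (exchangeability being preserved when an independent uniform permutation is attached componentwise) is handled correctly by your composition argument.
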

{\color{sztakiblue}
The intuition behind this result is that if $H_0$ holds true then the origininal dataset behaves similarly to the alternative ones, consequently, its rank in an ordering admits a (discrete) uniform distribution. The main power of this thoerem comes from its distribution-free and non-asymptotically guaranteed nature. 
Furthermore, we can observe that parameters $p$, $q$ and $m$ are user-chosen, hence the probability of the acceptance region can be controlled exactly when $H_0$ holds true, that is the probability of the type I error is exactly quantified.

The main statistical assumption of Theorem \ref{theorem:8} is very mild, that is we only require the data sample, $\CD_0$, to be i.i.d. Even though we presupposed that a ranking function is given, one can observe that our definition for $\psi$ is quite general. Indeed, it also allows some degenerate choices that only depend on the ancillary random permutation, $\pi$, which is attached to the datasets. Our intention is to exclude such options, therefore we examine the type II error probabilities of our hypothesis tests. We present two new ranking functions that are endowed with strong asymptotic bounds for their type II errors.
}

\vspace*{-1.8mm}
\subsection{Conditional Kernel Mean Embedding Based Tests}

The proposed ranking functions are defined via conditional kernel mean embeddings. The main idea is to compute the empirical estimate of the conditional kernel mean map based on all available samples, both the original and the alternatively generated ones, and compare these to an estimate obtained from the regression function of the nullhypothesis. The main observation is that the estimates based on the alternatively generated samples always converge to the theoretical conditional kernel mean map, which can be deduced from $f$, while the estimate based on the original sample, $\CD_0$, converges to the theoretical one only if $H_0$ holds true. We assume that:
\vspace{1mm}
\begin{itemize}
\item[A1] Kernel $l$ is the na\"ive kernel.\\[-3mm]
\item[A2] {\color{sztakiblue}Kernel $k$ is real-valued, measurable, $C_0$-universal \cite{carmeli2010vector} and bounded by  $C_k$ as well as $\Gamma = k\, \mathbf{Id}_\CG$.}%
\end{itemize}
\vspace{1mm}
{\color{sztakiblue}One can easily guarantee A2 by choosing a ``proper'' kernel $k$, e.g., a suitable choice is the Gaussian kernel, if $\BX = \RR^d$.}

%
We can observe that if a regression function, $f$, is given in binary classification, then the exact conditional kernel mean embedding $\mu_{\scriptscriptstyle{\bar{Y}| X}}$ and mean map $\mu_f$ can be obtained as
\begin{equation}\label{eq:cond_theoretic}
\begin{aligned}
&\mu_{\scriptscriptstyle{\bar{Y}| X}} = ( 1- p(X))l(\circ,-1) + p(X) l(\circ, 1) \;\; \text{and}\\[1mm]
&\mu_f(x) = ( 1- p(x))l(\circ,-1) + p(x) l(\circ, 1)\;\;\; P_{\scriptscriptstyle{X}}\text{-a.s.},
\end{aligned}
\end{equation}
because by the reproducing property for all $g \in \CG$ we have 
\begin{equation}
\begin{aligned}
&\langle( 1- p(X))l(\circ,-1) + p(X) l(\circ, 1),g\rangle_\CG \\[1mm]
&= ( 1- p(X))g(-1) + p(X) g(1) \myeq \EE \big[ g(\bar{Y}) \,|\, X \big].
\end{aligned}
\end{equation}
For simplicity, we denote $\mu_{f_*}$ by $\mu_*$.
We propose two methods to empirically estimate $\mu_f$. First, we use the regularized risk minimizer, $\widehat{\mu}$, defined in 
Theorem \ref{theorem:michelli}, and let
\begin{equation}\label{eq:r1}
\begin{aligned}
\widehat{\mu}_j^{(1)} \doteq \widehat{\mu}_{\CD_j} \;\;\;\text{for}\;\; j = 0, \dots,m-1.
\end{aligned}
\end{equation}
Second, we rely on the intuitive form of \eqref{eq:cond_theoretic} and estimate the conditional probability function $p$ directly by any standard method (e.g., $k$-nearest neighbors) and let 
\begin{equation}\label{eq:r2}
\begin{aligned}
\widehat{\mu}_j^{(2)}(x) \doteq (1- \widehat{p}_{j}(x))l(\circ,-1) + \widehat{p}_{j}(x) l(\circ, 1),
\end{aligned}
\end{equation}
for $j = 0, \dots,m-1$, where $\widehat{p}_j = \widehat{p}_{\CD_j}$ denotes the estimate of $p$ based on sample $\CD_j$.
The first approach follows our motivation by using a vector-valued RKHS and the {\color{sztakiblue} user-chosen 
kernel $\Gamma$ lets} us adaptively control the possibly high-dimensional scalar product. The second technique highly relies on the used conditional probability function estimator, hence we can make use of a broad range of point estimators available for this problem. {\color{sztakiblue} For brevity, we call the first approach vector-valued kernel test (VVKT) and the second approach point estimation based test (PET). The main advantage of VVKT comes from its nonparametric nature, while PET can be favorable when a priori information on the structure of $f$ is available.}

Let us define the ranking functions with the help of reference variables, which are estimates of the deviations between the empirical estimates and the theoretical conditional kernel mean map in some norm.
An intuitive norm to apply is the expected loss in $\norm{\cdot}_\CG$, i.e., for $\BX \to \CG$ type functions $\mu_f$, $\widehat{\mu} \in L_2(\PX;\CG)$ we consider the expected loss
\begin{equation} 
\begin{aligned}
\int_\BX \norm{ \mu_f(x) - \widehat{\mu}(x)}_\CG^2  \dd \PX(x).
\end{aligned}
\end{equation}
The usage of this ``metric'' is justified by \cite[Lemma 2.1]{park2020regularised}, where it is proved that for any estimator $\widehat{\mu}$ and conditional kernel mean map $\mu_f$, we have
\begin{equation}
\int_\BX \norm{ \mu_f(x) - \widehat{\mu}(x)}_\CG^2 \dd \PX (x)= \mathcal{E}_s(\widehat{\mu}) - \mathcal{E}_s(\mu_f),
\end{equation}
where the right hand side is the excess risk of $\widehat{\mu}$. The distribution of $X$ is unknown, thus the reference variables and the ranking functions are constructed as\footnote{{\color{sztakiblue}$Z_j^{(r)} \hspace*{-0.15mm}\prec_\pi \hspace*{-0.15mm} Z_0^{(r)} \Longleftrightarrow Z_j^{(r)}\hspace*{-0.15mm} < \hspace*{-0.15mm}Z_0^{(r)}$ or $\big(Z_j^{(r)}\hspace*{-0.15mm} =\hspace*{-0.15mm} Z_0^{(r)}$ and $\pi(j) \hspace*{-0.15mm}<\hspace*{-0.15mm} \pi(m)\big)$ }}
\begin{equation} \label{eq:squareroot}
	\begin{aligned}
		&Z_j^{(r)} \doteq \frac{1}{n}\sum_{i=1}^n \norm{ \mu_f(X_i) - \widehat{\mu}_j^{(r)}(X_i)}_\CG^2,\\
		&\CR_n^{(r)} \doteq 1 + \sum_{j=1}^{m-1} \BI\,\big(\, Z_j^{(r)} \prec_\pi Z_0^{(r)} \,\big)
	\end{aligned}
\end{equation}
for  $j = 0, \dots,m-1$ and  $r \in \{1,2\}$, {\color{sztakiblue} where $r$ refers to the two conditional kernel mean map estimators, \eqref{eq:r1} and \eqref{eq:r2}}. The acceptance regions of the proposed hypothesis tests are defined by \eqref{eq:test} with $\psi(\CD_0^\pi,\dots,\CD_{m-1}^\pi) = \CR_n^{(r)}$. {\color{sztakiblue} The idea is to reject $f$ when $Z_0^{(r)}$ is too high in which case our original estimate is far from the theoretical map given $f$. Hence setting $p$ to $1$ is favorable.} The main advantage of these hypothesis tests is that we can adjust the exact type I error probability to our needs for any finite $n$, irrespective of the sample distribution. Moreover, asymptotic guarantees can be ensured for the type II probabilities. We propose the following assumption to provide asymptotic guarantees for the tests:
{\color{sztakiblue}
\begin{itemize}
\vspace{1mm}
\item[B1] For the conditional kernel mean map estimates we have
\begin{equation*}
\frac{1}{n}\sum_{i=1}^n \norm{\mu_*(X_i) - \widehat{\mu}^{(1)}(X_i)}_\CG^2 \xrightarrow[n\to \infty]{\,a.s.\,} 0.
\end{equation*}
\end{itemize}That is we assume that the used regularized risk minimizer 
is consistent in the sense above. 
Condition B1, although nontrivial, is
however key in proving that a hypothesis test can preserve the favorable  asymptotic behaviour of the point estimator while also non-asymptotically guaranteeing a user-chosen type I error probability.}
{\color{sztakiblue}
\begin{theorem}\label{theorem:alg2}
Assume that A0, A1, A2 and $H_0$ hold true, then for all sample size $n \in \NN$ we have
\begin{equation}
\PP \Big(\, \CR_n^{(1)} \leq 	q \,\Big) = \frac{q}{m}.
\end{equation} 
If A0, A1, A2, B1, $q < m$ and $H_1$ hold, then
\begin{equation}\label{equation:thm4}
\PP\bigg(\,\bigcap_{N=1}^\infty \bigcup_{n=1}^N \big\{\CR_n^{(1)} \leq q \big\} \,\bigg) = 0.
\vspace{2mm}
\end{equation}
\end{theorem}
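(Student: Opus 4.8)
The first identity is an immediate consequence of Theorem~\ref{theorem:8}. First I would check that $\CR_n^{(1)}$, together with the ancillary tie-breaking permutation $\pi$, is a ranking function in the sense of Definition~\ref{def:ranking-function}. Property~P1 holds since, by \eqref{eq:squareroot}, $\CR_n^{(1)} = 1 + \sum_{j=1}^{m-1}\BI\big(Z_j^{(1)}\prec_\pi Z_0^{(1)}\big)$ is symmetric in the indices $1,\dots,m-1$; property~P2 holds because the permutation values $\pi(1),\dots,\pi(m)$ are pairwise distinct, so the lexicographic rule $\prec_\pi$ induces a strict total order on the augmented samples $\CD_0^\pi,\dots,\CD_{m-1}^\pi$ and hence assigns them distinct ranks. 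Applying Theorem~\ref{theorem:8} with $p=1$ gives $\PP(1\le\CR_n^{(1)}\le q)=q/m$, and since $\CR_n^{(1)}\ge 1$ always, $\{\CR_n^{(1)}\le q\}=\{1\le\CR_n^{(1)}\le q\}$, which proves the first statement.

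For the second statement it suffices to show that almost surely $\CR_n^{(1)}=m$ for all sufficiently large $n$; since $q<m$, this forces the event $\{\CR_n^{(1)}\le q\}$ to occur for only finitely many $n$, so the probability in \eqref{equation:thm4} is zero. Write $\norm{g}_n^2\doteq\frac1n\sum_{i=1}^n\norm{g(X_i)}_\CG^2$ for the empirical seminorm, so that $Z_j^{(1)}=\norm{\mu_f-\widehat\mu_j^{(1)}}_n^2$ with $\widehat\mu_j^{(1)}$ as in \eqref{eq:r1}. I would then treat the synthetic samples $\CD_1,\dots,\CD_{m-1}$ and the original sample $\CD_0$ separately.

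Each $\CD_j=\{(X_i,\bar Y_{i,j})\}_{i=1}^n$ with $j\in[m-1]$ is a genuine i.i.d.\ sample whose own regression function is the candidate $f$, hence its true conditional kernel mean map is $\mu_f$, and B1 --- read as the distribution-free consistency statement it is --- applied to $\CD_j$ gives $Z_j^{(1)}\xrightarrow{a.s.}0$; intersecting finitely many almost sure events, this holds simultaneously for $j=1,\dots,m-1$. For $\CD_0$, whose regression function is $f_*$, B1 gives $\norm{\mu_*-\widehat\mu_0^{(1)}}_n\xrightarrow{a.s.}0$, while the strong law of large numbers applied to the i.i.d.\ summands $\norm{\mu_f(X_i)-\mu_*(X_i)}_\CG^2$ --- bounded by $4$ because the na\"ive kernel (A1) yields $\norm{\mu_f(x)}_\CG,\norm{\mu_*(x)}_\CG\le 1$ --- gives $\norm{\mu_f-\mu_*}_n^2\xrightarrow{a.s.}\EE\norm{\mu_f(X)-\mu_*(X)}_\CG^2=:c$. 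By the reverse triangle inequality $\sqrt{Z_0^{(1)}}=\norm{\mu_f-\widehat\mu_0^{(1)}}_n\ge\norm{\mu_f-\mu_*}_n-\norm{\mu_*-\widehat\mu_0^{(1)}}_n$, whence $\liminf_n Z_0^{(1)}\ge c$ almost surely. By \eqref{eq:cond_theoretic}, $\mu_f(x)-\mu_*(x)=\big(p(x)-p_*(x)\big)\big(l(\circ,1)-l(\circ,-1)\big)$, and since $l(\circ,1),l(\circ,-1)$ are orthonormal (A1), $\norm{\mu_f(x)-\mu_*(x)}_\CG^2=2\big(p(x)-p_*(x)\big)^2=\tfrac12\big(f(x)-f_*(x)\big)^2$, so $c=\tfrac12\norm{f-f_*}_{L_2(\PX)}^2>0$ precisely because $H_1$ holds. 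Consequently, almost surely there is a random index $N_0$ with $Z_0^{(1)}>c/2>Z_j^{(1)}$ for all $j\in[m-1]$ and all $n\ge N_0$; then $Z_j^{(1)}\prec_\pi Z_0^{(1)}$ for every such $j$ and $n$, so $\CR_n^{(1)}=1+(m-1)=m$, which completes the argument.

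The main obstacle is this separation step. One has to recognise that B1 is essentially a distribution-free property, so it applies to the artificially generated samples $\CD_j$ just as to $\CD_0$, driving $\widehat\mu_j^{(1)}\to\mu_f$ while $\widehat\mu_0^{(1)}\to\mu_*\neq\mu_f$ in empirical norm; and then one must convert this qualitative statement into a strict gap $Z_0^{(1)}>Z_j^{(1)}$ holding uniformly for all large $n$, which requires combining the reverse triangle inequality, the Bochner-valued strong law of large numbers (with the boundedness afforded by the na\"ive kernel), and the identity $\norm{\mu_f-\mu_*}_{L_2(\PX;\CG)}^2=\tfrac12\norm{f-f_*}_{L_2(\PX)}^2$ that turns $H_1$ into $c>0$.
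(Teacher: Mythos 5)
Your proof is correct and follows essentially the same route as the paper's: the first claim via Theorem~\ref{theorem:8} with $p=1$, and the second via the reverse triangle inequality, the SLLN giving $\frac1n\sum_i\norm{\mu_f(X_i)-\mu_*(X_i)}_\CG^2\to 2\,\EE[(p(X)-p_*(X))^2]>0$ under $H_1$, and B1 driving both $\norm{\mu_*-\widehat\mu_0^{(1)}}_n$ and the alternative samples' $Z_j^{(1)}$ to zero. You are in fact slightly more explicit than the paper on two points it leaves implicit --- that $\CR_n^{(1)}$ is a valid ranking function, and that B1 must be read as applying to the resampled datasets $\CD_j$ (whose regression function is $f$) as well as to $\CD_0$ --- which is a welcome clarification rather than a deviation.
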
}
\vspace{1mm}
{\color{sztakiblue}
The tail event in \eqref{equation:thm4} is often called the ``$\limsup$'' of events $\big\{\CR_n^{(1)} \leq q \big\}$, where $n \in \NN$. In other words, the theorem states that
the probability of type I error is exactly $1-q/m$.
Moreover, under $H_1$, $\CR_n^{(1)} \leq q$ happens infinitely many times with zero probability, equivalently a ``false'' regression function is (a.s.) accepted at most finitely many times. The pointwise convergence of the type II error probabilities to zero (as $n \to \infty$) is a straightforward consequence of \eqref{equation:thm4}.}

{\color{sztakiblue}
A similar theorem holds for the second approach, where we assume the consistency of $\widehat{p}$ in the following sense:
\begin{itemize}
\item[C1] For conditional probability function estimator $\widehat{p}$ we have
\begin{equation}
\frac{1}{n}\sum_{i=1}^n(p(X_i) - \widehat{p}(X_i))^2 \xrightarrow[n\to \infty]{\,a.s.\,}0.
\end{equation}
\end{itemize}
Condition C1 holds for a broad range of conditional probability estimators (e.g., kNN, various kernel estimates), however most of these techniques 
make stronger assumptions on the data generating distribution than we do. As in Theorem \ref{theorem:alg2} the presented stochastic guarantee for the type I error is non-asymptotic, while for the type II error it is asymptotic.
\vspace{1.5mm}
\begin{theorem}\label{theorem:alg3}
Assume that A0, A1, A2 and $H_0$ hold true, then for all sample size $n \in \NN$ we have
\begin{equation}
\PP\Big(\, \CR_n^{(2)} \leq 	q \,\Big) = \frac{q}{m}.
\end{equation} 
If A1, C1, $q < m$ and $H_1$ hold, then 
\begin{equation}\label{eq:limsup_3}
\PP\bigg(\,\bigcap_{N=1}^\infty \bigcup_{n=1}^N \big\{\CR_n^{(2)} \leq q \big\} \,\bigg) = 0.
\vspace{3mm}
\end{equation}
\end{theorem}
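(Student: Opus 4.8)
The plan is to mirror the two-part structure of the proof of Theorem~\ref{theorem:alg2}: first read off the exact type~I probability from the ranking-function framework, then establish the almost-sure statement under $H_1$ by reducing everything to the empirical $L_2$ error of the probability estimates $\widehat{p}_j$. For the type~I part, the first observation is that the target $\mu_f$ appearing in \eqref{eq:squareroot} is fixed (it is computed from the candidate $f$, not estimated), so each reference variable $Z_j^{(2)}$ is a measurable function of $\CD_j$ alone. Hence the map $(\CD_0^\pi,\dots,\CD_{m-1}^\pi)\mapsto\CR_n^{(2)}$ defined through \eqref{eq:squareroot} is a ranking function in the sense of Definition~\ref{def:ranking-function}: property P1 holds because permuting $\CD_1^\pi,\dots,\CD_{m-1}^\pi$ merely permutes the indicator summands, and P2 holds because the tie-breaking relation $\prec_\pi$ attached via the ancillary permutation $\pi$ is a strict total order, so it produces pairwise distinct ranks. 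Applying Theorem~\ref{theorem:8} with $p=1$, and noting that $\CR_n^{(2)}\ge 1$ by construction, yields $\PP(\CR_n^{(2)}\le q)=\PP(1\le\CR_n^{(2)}\le q)=q/m$ for every $n$ under A0 and $H_0$; A1 (and A2) enter here only to make $\CG$ and $\mu_f$ well defined.

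The crucial step for the $H_1$ part is an exact identity. Since $l$ is the na\"ive kernel (A1), the reproducing property shows that $\{l(\circ,1),l(\circ,-1)\}$ is an orthonormal basis of $\CG$, so $\norm{l(\circ,1)-l(\circ,-1)}_\CG^2=2$; combining the explicit forms \eqref{eq:cond_theoretic} and \eqref{eq:r2} gives $\mu_f(x)-\widehat{\mu}_j^{(2)}(x)=(p(x)-\widehat{p}_j(x))\,(l(\circ,1)-l(\circ,-1))$, whence
\begin{equation*}
Z_j^{(2)}=\frac{2}{n}\sum_{i=1}^n\big(p(X_i)-\widehat{p}_j(X_i)\big)^2,\qquad j=0,\dots,m-1.
\end{equation*}
This reduces the claim \eqref{eq:limsup_3} to controlling the empirical $L_2$ errors $\frac{1}{n}\sum_i(p(X_i)-\widehat{p}_j(X_i))^2$ of the probability estimates.

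For $j\in[m-1]$ the dataset $\CD_j$ is, marginally, an i.i.d.\ sample with conditional probability function $p$, so C1 applied to $\CD_j$ gives $Z_j^{(2)}\to 0$ almost surely. For $j=0$ the dataset $\CD_0$ is i.i.d.\ with conditional probability function $p_*$, so C1 gives $\frac{1}{n}\sum_i(p_*(X_i)-\widehat{p}_0(X_i))^2\to 0$ a.s., while the strong law of large numbers (the summands being bounded) yields $\frac{1}{n}\sum_i(p(X_i)-p_*(X_i))^2\to\EE[(p(X)-p_*(X))^2]=\frac{1}{4}\norm{f-f_*}_{L_2(\PX)}^2=:\delta$, and $\delta>0$ precisely because $H_1$ holds. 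The reverse triangle inequality in the empirical $L_2$ norm then gives $\liminf_n Z_0^{(2)}\ge 2\delta>0$ a.s. Since $m-1$ is finite, on a probability-one event there is a (random) $n_0$ with $Z_j^{(2)}<\delta\le Z_0^{(2)}$ for all $j\in[m-1]$ and all $n\ge n_0$; for such $n$ we have $Z_j^{(2)}\prec_\pi Z_0^{(2)}$ for every $j\in[m-1]$, so $\CR_n^{(2)}=m>q$ (this is where the hypothesis $q<m$ is used). Therefore $\{\CR_n^{(2)}\le q\}$ occurs only finitely often almost surely, which is exactly \eqref{eq:limsup_3}.

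I expect the main obstacle to lie in the two places where some care is genuinely needed: (i) justifying the separate use of C1 for each resampled dataset $\CD_j$ even though the inputs $\{X_i\}$ are shared across all of them --- this works because, marginally, each $\CD_j$ is an i.i.d.\ sample from the appropriate joint law, with conditional probability $p$ for $j\in[m-1]$ and $p_*$ for $j=0$; and (ii) securing a strictly positive almost-sure lower bound on $Z_0^{(2)}$, for which the reverse triangle inequality together with the $L_2(\PX)$ reading of $H_1$ (which guarantees $\delta>0$) is the essential ingredient. The remaining steps are routine.
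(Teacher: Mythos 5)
Your proposal is correct and follows essentially the same route as the paper: reduce $Z_j^{(2)}$ to $\tfrac{2}{n}\sum_i(p(X_i)-\widehat{p}_j(X_i))^2$ via the na\"ive-kernel identity, send the alternative reference variables to zero by C1, and bound $Z_0^{(2)}$ away from zero using $H_1$ so that $\CR_n^{(2)}=m>q$ eventually. The only cosmetic difference is that you handle the $j=0$ term with the reverse triangle inequality (as the paper does for Theorem~\ref{theorem:alg2}), whereas the paper's proof of this theorem expands the square and kills the cross term by Cauchy--Schwarz; both yield the required strictly positive lower bound.
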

\noindent The proof of both theorems can be found in the appendix. 
}

\begin{figure}[tb]
	\vspace*{-2mm}
    \centering
	\hspace*{-1mm}	
	\subfigure[VVKT]{\label{fig:covar}\includegraphics[height=40mm, width = 40mm]{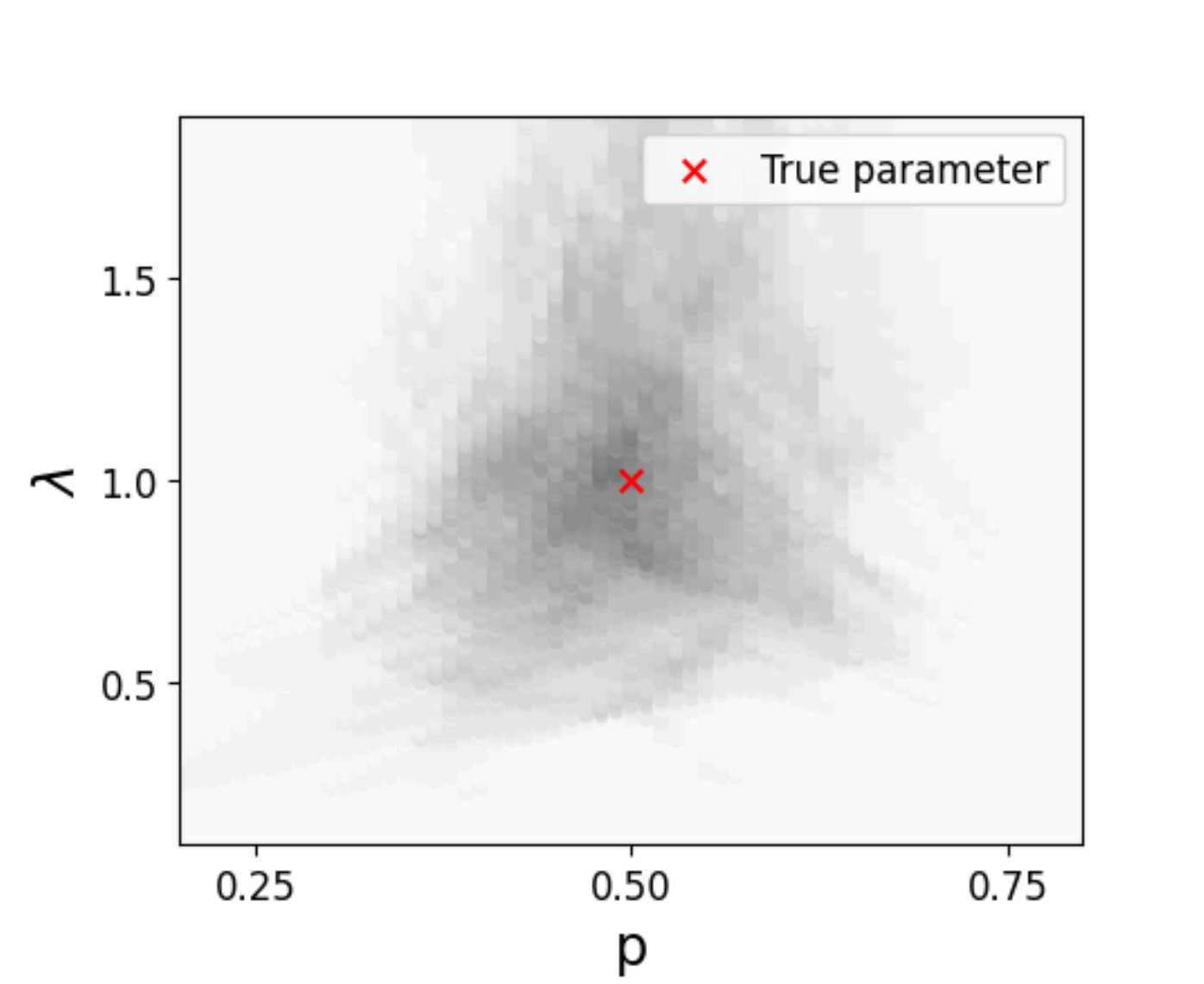}}
 	\subfigure[PET]{\label{fig:cond_emb}\includegraphics[height=40mm, width = 48mm]{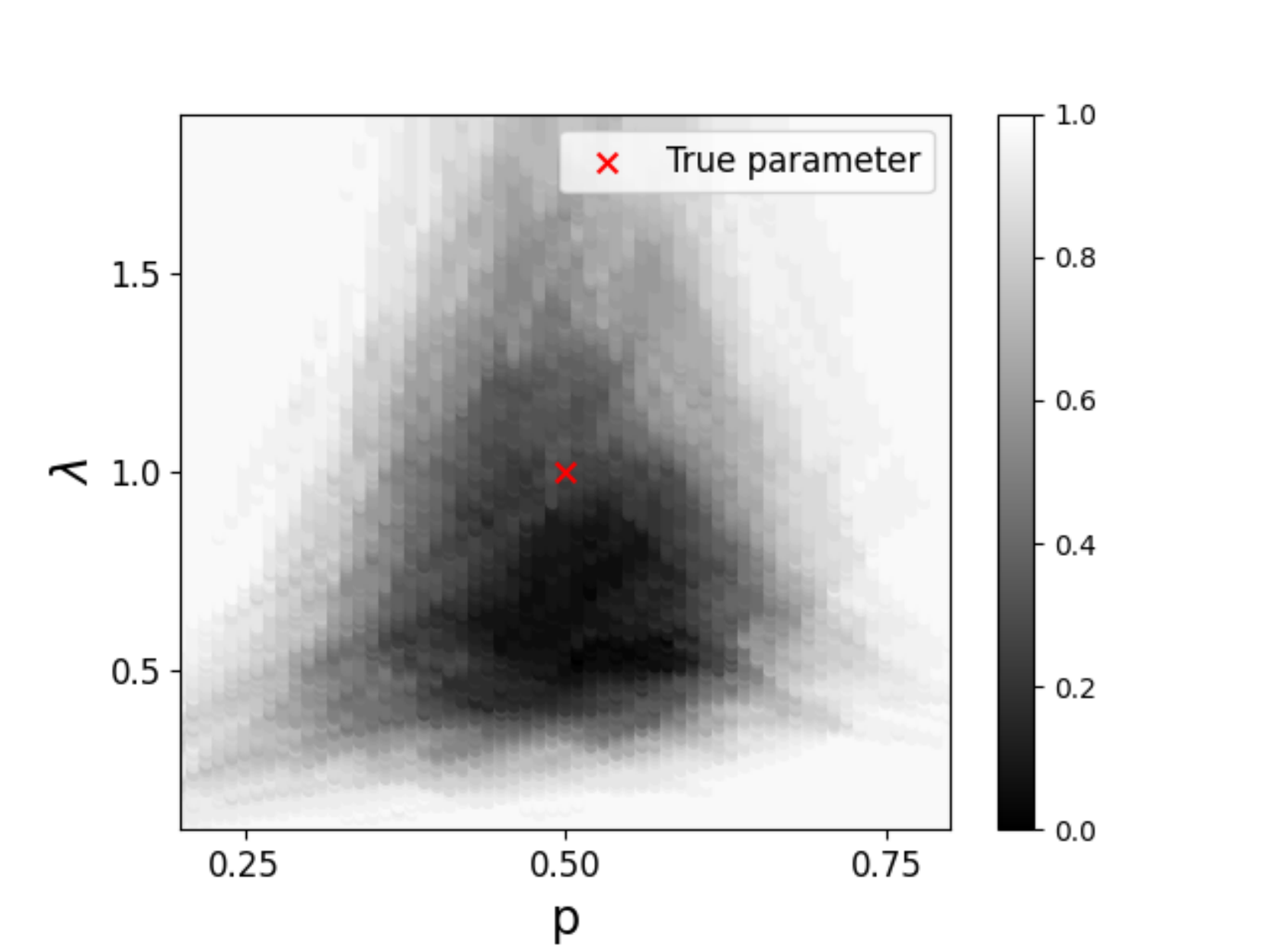} 	\hspace*{-6mm}}
 	\caption{The normalized ranks of VVKTs and PETs are represented with the darkness of points on a grid based on a sample of size $n=50$ with resampling parameter $m=40$. For VVKTs we used a Gaussian kernel with $\sigma =\nicefrac{1}{2}$. For PETs we used kNN with $k=\lfloor \sqrt{n}\rfloor$ neighbors.}
	\subfigure[Ranks $(p=0.3$, $\lambda=1.3)$]{\label{fig:consistency_1}\includegraphics[height=40mm, width = 40mm]{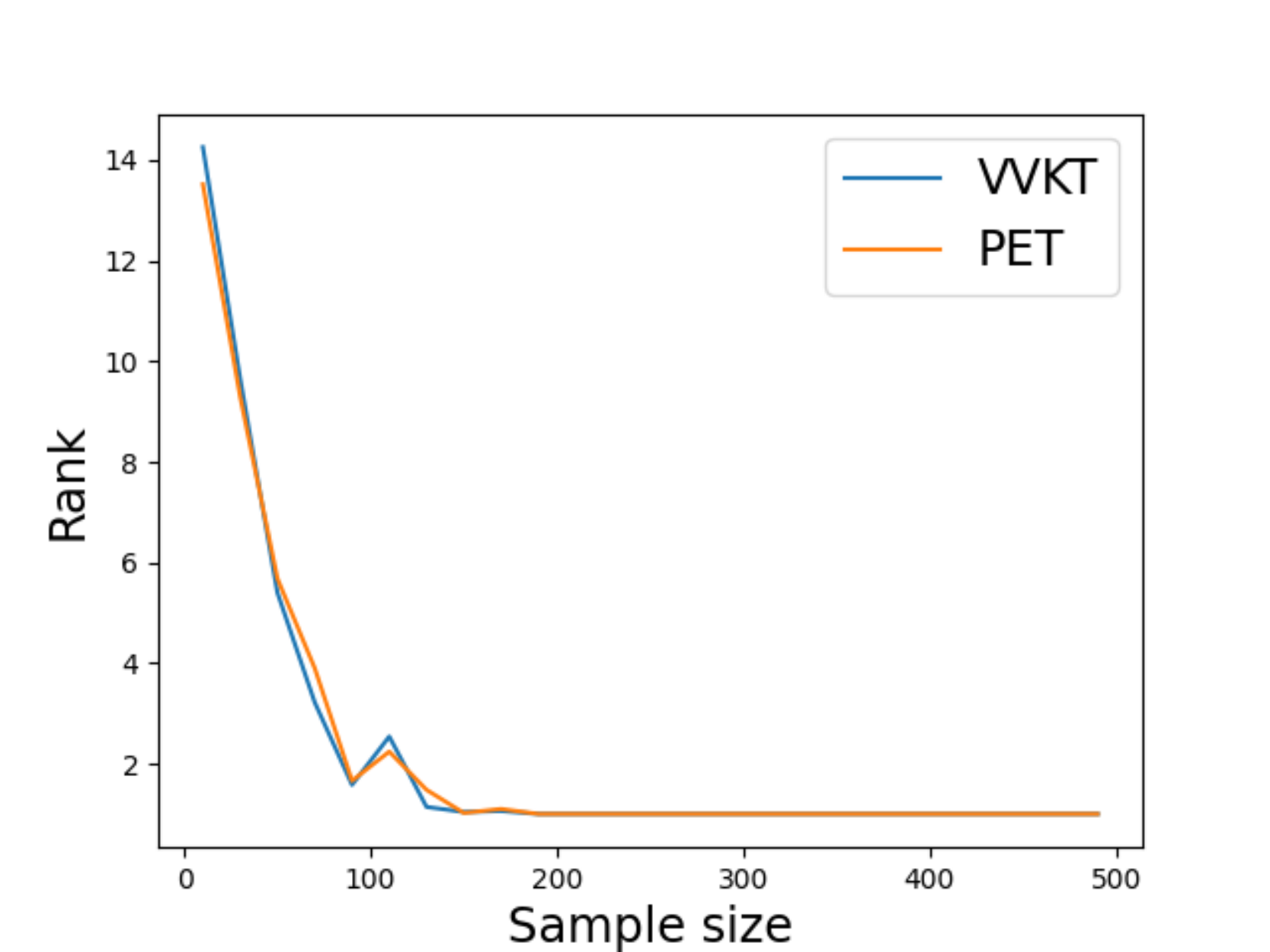}} 
	\subfigure[Ranks $(p=0.4$, $\lambda=1.2)$]{\label{fig:consistency_2}\includegraphics[height=40mm, width = 40mm]{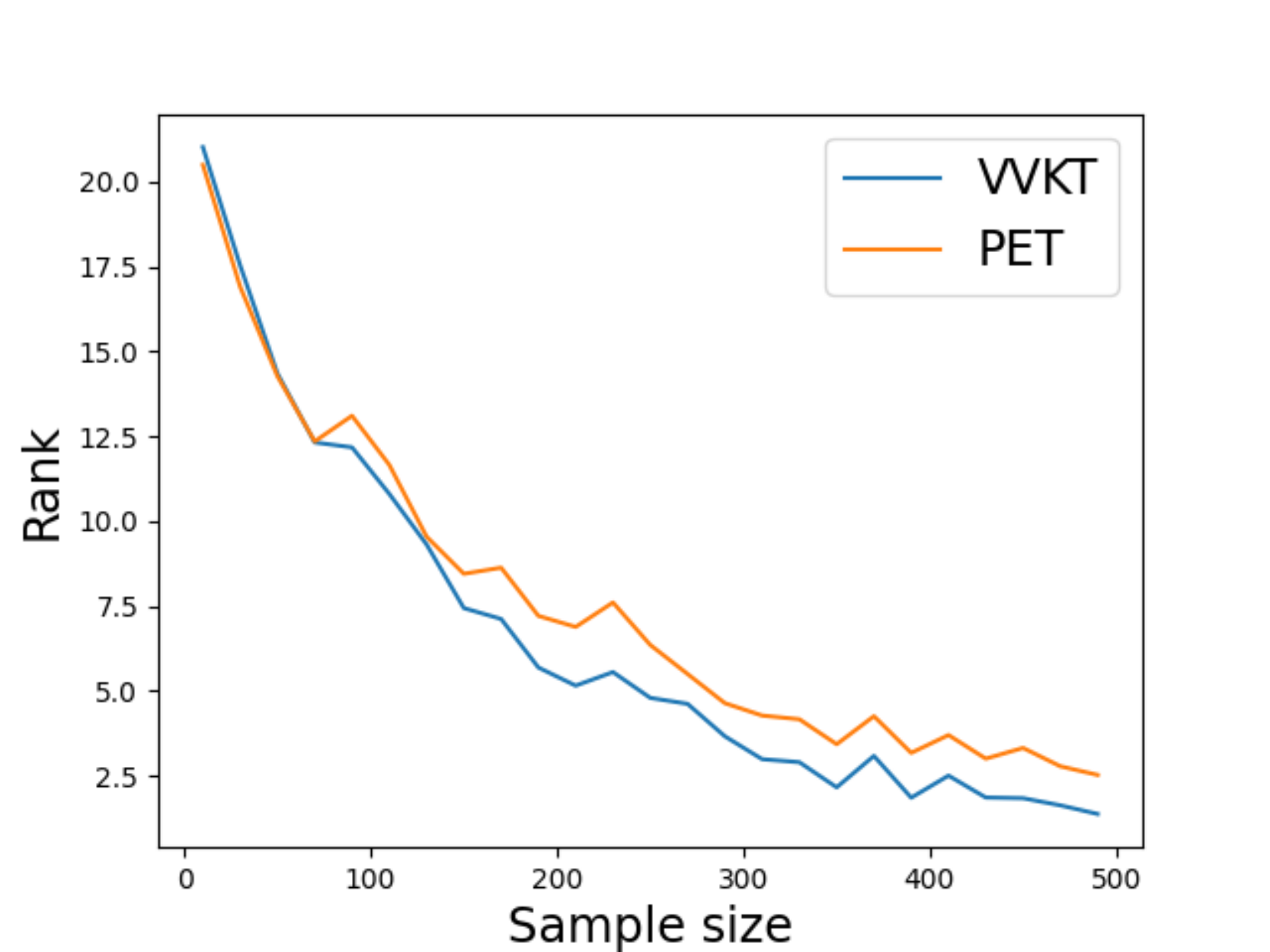}} 	
    \caption{The ranks of the reference variables are shown as functions of the sample size for two arbitrarily chosen ``false'' candidate functions.}
\label{fig:experiments}
\vspace*{-3mm}
\end{figure} 

\vspace*{0mm}
\section{Numerical Simulations}

We made numerical experiments on a synthetic dataset to illustrate the suggested hypothesis tests. We considered the one dimensional, bounded input space $[-1,1]$ with binary outputs. The marginal distribution of the inputs were uniform. The true regression function was the following model:
\begin{equation}
f_*(x) \doteq \frac{p_*\cdot e^{-(x-\mu_1)^2/\lambda_*}- (1-p_*)e^{-(x-\mu_2)^2/\lambda_*}}{p_*\cdot e^{-(x-\mu_1)^2/\lambda_*}+ (1-p_*)e^{-(x-\mu_2)^2/\lambda_*}},
\end{equation}
where $p_*=0.5$, $\lambda_*=1$, $\mu_1 = 1$ and $\mu_2 =-1$. This form of the regression function is the reparametrization of a logistic regression model, which is an archetypical approach for binary classification. We get the same formula if we mix together two Gaussian classes.
The translation parameters ($\mu_1$ and $\mu_2$) were considered to be known to illustrate the hypothesis tests with two dimensional pictures. The sample size was $n =50$ and the resampling parameter was $m=40$. We tested parameter pairs of $(p,\lambda)$ on a fine grid with stepsize $0.01$ on $[0.2,0.8]\times [0.5,1.5]$.  
The two hypothesis tests are illustrated with the generated {\color{sztakiblue}rank statistics} for all tested parameters on Figures \ref{fig:covar} and \ref{fig:cond_emb}. {\color{sztakiblue} These normalized values} are indicated with the colors of the points.  Kernel $k$ was a Gaussian with parameter $\sigma = \nicefrac{1}{2}$ for the VVKTs and we used kNN-estimates for PETs with $k=\lfloor \sqrt{n}\rfloor$ neighbors. We illustrated the consistency of our algorithm by plotting the ranks of the reference variables for parameters $(0.3,1.3)$ and  $(0.4,1.2)$ for various sample sizes in Figures \ref{fig:consistency_1} and \ref{fig:consistency_2}. We took the average rank over several runs for each datasize. We can see that the reference variable corresponding to the original sample rapidly tends to be the greatest, though the rate of convergence depends on the 
particular hypothesis.

\vspace*{-1mm}
\section{Conclusions}

In this paper we have introduced two new distribution-free hypothesis tests for the regression functions of binary classification based on conditional kernel mean embeddings. 

Both proposed methods incorporate the idea that the output labels can be resampled based on the candidate regression function we are testing. The main advantages of the suggested hypothesis tests are as follows: (1) they have a user-chosen exact probability for the type I error, which is non-asymptotically guaranteed for any sample size; furthermore, (2) they are also consistent, i.e., the probability of their type II error converges to zero, as the sample size tends to infinity. 
Our approach can be used to quantify the uncertainty of classification models and it can form a basis of confidence region constructions.
\vspace*{-1mm}
\bibliographystyle{ieeetr}
\bibliography{embedding_tests_bib} 

\appendix
\label{app:theorem}

\subsection{Proof of Theorem \ref{theorem:alg2}}\label{app:a}

\noindent
\begin{proof}
The first equality follows from Theorem \ref{theorem:8}. When the alternative hypothesis holds true, i.e., $f \neq f_*$, 
let $S_n^{(j)} = \sqrt{Z_j^{(1)}}$ for $j= 0, \dots, m-1$. %
It is sufficient to show that $S_n^{(0)}$ tends to be the greatest in the ordering as $n \to \infty$, because the square root function is order-preserving. For $j=0$ by the reverse triangle inequality we have
\begin{equation}\label{eq:pr2}
\begin{aligned}
&S_n^{(0)}  = \sqrt{\frac{1}{n} \sum_{i=1}^n \norm{\mu_f(X_i) - \widehat{\mu}_0^{(1)}(X_i)}_\CG^2} \\[2mm]
&\geq  \sqrt{\frac{1}{n} \sum_{i=1}^n \norm{\mu_f(X_i) - \mu_*(X_i)}_\CG^2} \\[2mm]
&-\sqrt{\frac{1}{n} \sum_{i=1}^n \norm{ \mu_*(X_i)-\widehat{\mu}_0^{(1)}(X_i)}_\CG^2}.
\end{aligned} 
\end{equation}
The first term converges to a positive number, as
\vspace{-1mm}
\begin{equation*}
\hspace*{-3mm}
\frac{1}{n} \sum_{i=1}^n \norm{\mu_f(X_i) - \mu_*(X_i)}_\CG^2 =
\end{equation*}
\begin{equation}
\hspace*{-3mm}
\begin{aligned}
&=  \frac{1}{n} \sum_{i=1}^n \|(p(X_i) - p_*(X_i))l(\circ,1) \\
&\hspace{2cm}+ (p_*(X_i)- p(X_i))l(\circ,-1) \|_\CG^2\\
&=  \frac{1}{n} \sum_{i=1}^n \big[\big(p(X_i) - p_*(X_i)\big)^2 l(1,1) \\
&\hspace{2cm}+ \big(p_*(X_i)- p(X_i)\big)^2l(-1,-1)\big]\\
&= \frac{2}{n} \sum_{i=1}^n \Big[\big(p(X_i) - p_*(X_i)\big)^2\Big] \to 2\, \EE \Big[\big(p(X)- p_*(X)\big)^2\Big],\hspace*{-10mm}
\end{aligned}
\vspace{-1mm}
\end{equation}
where we used the SLLN and that $l(1,-1)=l(-1,1)=0$. When $f \neq f_*$ we have  that $\kappa \doteq \EE \Big[\big(p(X)- p_*(X)\big)^2\Big] >0$.
{\color{sztakiblue}
The second term almost surely converges to zero by B1, hence we can conclude that $S_n^{(0)} \xrightarrow{\,a.s.\,}\sqrt{2\kappa}$.}

For $j \in [m-1]$ 
variable 
$Z_j^{(1)}$ has a similar form as the second term in \eqref{eq:pr2}, thus its 
(a.s.) convergence to $0$ {\color{sztakiblue} follows from B1}, i.e., we get {\color{sztakiblue}$Z_j^{(1)} \xrightarrow{\,a.s.\,}0$}. Hence, $Z_0^{(1)}$ {\color{sztakiblue}(a.s.)} tends to become the greatest, implying \eqref{equation:thm4} for $q < m$.
\end{proof}

\vspace*{-2mm}
\subsection{Proof of Theorem \ref{theorem:alg3}}

\noindent
\begin{proof}
The first part of the theorem follows from Theorem \ref{theorem:8} with $p=1$.
For the second part let $f \neq f_*$. We transform the reference variables as
\begin{equation}
\begin{aligned}
&Z_j^{(2)} = \frac{1}{n} \sum_{i=1}^n \|(p(X_i) l(\circ,1) + (1- p(X_i))l(\circ,-1)\\
& \hspace{2cm} - \big(\,\widehat{p}_j(X_i) l(\circ,1) + (1- \widehat{p}_j(X_i))l(\circ,-1)\big)\|_\CG^2\\
&=  \frac{1}{n} \sum_{i=1}^n \|(p(X_i)-\widehat{p}_j(X_i)) l(\circ,1) \\
& \hspace{2cm}+ (\,\widehat{p}_j(X_i)- p(X_i))l(\circ,-1) \|_\CG^2\\
\end{aligned}
\end{equation}
\begin{equation*}
\begin{aligned}
&= \frac{1}{n} \sum_{i=1}^n \big( (p(X_i)-\widehat{p}_j(X_i))^2 l(1,1) \\
&\hspace{2cm}+   (\widehat{p}_j(X_i)- p(X_i))^2l(-1,-1)\big) \\
&= \frac{2}{n}\sum_{i=1}^n (p(X_i)-\widehat{p}_j(X_i))^2 
\end{aligned}
\end{equation*}
for $j =0,\dots, m-1$. From C1 it follows that $Z_j^{(2)}$ goes to zero a.s.\ for $j \in [m-1]$. For $j=0$ we argue that $Z_0^{(2)} \to \kappa$ for some $\kappa >0$. Notice that
\begin{equation*}
\hspace*{-2mm}
\begin{aligned}
&\frac{2}{n}\sum_{i=1}^n (p(X_i)-\widehat{p}_{0}(X_i))^2 \\
& = \frac{2}{n}\sum_{i=1}^n (p(X_i)- p_*(X_i))^2+ \frac{2}{n}\sum_{i=1}^n( p_*(X_i) - \widehat{p}_{0}(X_i))^2 \\
&+ \frac{4}{n} \sum_{i=1}^n \Big((\,p(X_i) - p_*(X_i)\,)(\,p_*(X_i) - \widehat{p}_{0}(X_i)\,) \Big) \hspace*{-10mm}
\end{aligned}
\end{equation*}
holds.  By the SLLN the first term converges to a positive number, $\EE \big[( p_*(X)- p_0(X))^2\big] >0$. By C1 the second term converges to $0$ (a.s.). The third term also tends to $0$ by the Cauchy-Schwartz inequality and C1, as for $x \in \BX$ we have $|\,p(x)- p_*(x)\,| \leq  1$.
We conclude that  if $f \neq f_*,$ $Z_0^{(2)}$ (a.s.) tends to be the greatest in the ordering implying \eqref{eq:limsup_3}.
\end{proof}

\end{document}